\DeclareFontFamily{T1}{pzc}{}
\DeclareFontShape{T1}{pzc}{m}{it}{<-> s * [1.10] pzcmi7t}{}
\DeclareMathAlphabet{\mathpzc}{T1}{pzc}{m}{it}
\newcommand{\logvar}[1]{\mathpzc{#1}}
\newcommand{\pro}{\mathbb{P}}
\newcommand{\pr}[1]{\mathbb{P}\!\left( #1 \right)}
\newtheorem{Theorem}{Theorem}
\begin{document}

\title{First-Order Bayesian Network Specifications \\ 
Capture the Complexity Class $\mathsf{PP}$}
\author{Fabio G. Cozman}

\maketitle

\section{Introduction}

The point of this note is to prove that a language is in the complexity class 
$\mathsf{PP}$ if and only if the strings of the language encode valid inferences 
in a Bayesian network defined using function-free first-order logic with equality.
Before this statement can be made precise, a number of definitions are needed. 
Section \ref{section:Background} summarizes the necessary background and
Section \ref{section:Characters} defines first-order Bayesian network specifications
and the complexity class $\mathsf{PP}$. 
Section \ref{section:CapturePP} states and proves the former captures the latter.

\section{Background}\label{section:Background}

We collect a number of definitions here \cite{Gradel2007,Libkin2012},
so as to fix our terminology and notation.

We consider input strings in the alphabet $\{0,1\}$; that is, a {\em string} is a sequence
of $0$s and $1$s. A {\em language} is a set of strings; a {\em complexity class} is 
a set of languages. 
A language is {\em decided} by a Turing machine if the machine accepts each string 
in the language, and rejects each string not in the language. The complexity class 
$\mathsf{NP}$ contains each languages that can be decided by a nondeterministic
Turing machine with a polynomial time bound. 

We focus on function-free first-order logic with equality (denoted by
$\mathsf{FFFO}$). That is, all formulas we contemplate are well-formed 
formulas of first-order logic with equality but without functions, containing
predicates, negation ($\neg$), conjunction ($\wedge$), disjunction ($\vee$),
implication ($\Rightarrow$), equivalence ($\Leftrightarrow$),   
 existential quatification ($\exists$)  and universal 
quantification ($\forall$).
The set of predicates is the {\em vocabulary}.

A formula $\phi$ in existential function-free second-order logic 
(denoted by $\mathsf{ESO}$) is a formula 
of the form $\exists r_1 \dots \exists r_m \phi'$, where $\phi'$ is a sentence
of $\mathsf{FFFO}$ containing predicates $r_1,\dots,r_m$. Such a sentence allows
existential quantification over the predicates themselves. Note that again we
have equality in the language (that is, the built-in predicate $=$ is always
available). 

For a given vocabulary, a structure $\mathfrak{A}$ is a pair consisting of a {\em domain} and
an {\em interpretation}. A domain is simply a set. An {\em interpretation} is
a truth assignment for every grounding of every predicate that is not 
existentially quantified. As an example, consider the following formula
of $\mathsf{ESO}$ as discussed by Gr\"adel \cite{Gradel2007}:
\[
\exists\mathsf{partition}: \forall \logvar{x}: \forall \logvar{y}:
\left( \mathsf{edge}(\logvar{x},\logvar{y}) \Rightarrow 
(\mathsf{partition}(\logvar{x}) \Leftrightarrow \neg \mathsf{partition}(\logvar{y}))
\right).
\]  
A domain is then a set that can be
taken as the set of nodes of an input graph. An interpretation is a truth
assignment for the $\mathsf{edge}$ predicate and can be taken as the set
of edges of the input graph. The formula is satisfied if and only if it is possible 
to partition the vertices into two subsets such that if a node is in one subset, 
it is not in the other. That is, the formula is satisfied if and only if the input
graph is bipartite.
 
We only consider finite vocabularies and finite domains in this note.
If a formula $\phi(\hat{\logvar{x}})$ has free logical variables $\hat{\logvar{x}}$, then
denote by $\mathfrak{A} \models \phi(\hat{a})$ the fact that formula 
$\phi(\hat{a})$ is true in structure $\mathfrak{A}$ when the logical 
variables $\hat{\logvar{x}}$ are replaced by elements of the domain $\hat{a}$. 
In this  case say that $\mathfrak{A}$ is a {\em model} of $\phi(\hat{a})$. 

Note that if $\phi(\hat{\logvar{x}})$ is a formula in $\mathsf{ESO}$
 as in the previous paragraphs, then its interpretations runs over the groundings
of the non-quantified predicates; that is, if $\phi$ contains predicates 
$r_1,\dots,r_m$ and $s_1,\dots,s_M$, but $r_1,\dots,r_m$ are all
existentially quantified, then a model for $\phi$ contains an intepretation
for $s_1,\dots,s_M$.

There is an {\em isomorphism} between structures $\mathfrak{A}_1$ and $\mathfrak{A}_2$
when there is a bijective mapping $g$ between the domains such that if
$r(a_1,\dots,a_k)$ is true in $\mathfrak{A}_1$, then $r(g(a_1),\dots,g(a_k))$ is true
in $\mathfrak{A}_2$, and moreover if $r(a_1,\dots,a_k)$ is true in $\mathfrak{A}_2$,
then $r(g^{1}(a_1),\dots,g^{-1}(a_k))$ is true in $\mathfrak{A}_1$ (where $g^{-1}$
denotes the inverse of $g$). A set of structures is {\em isomorphism-closed} if whenever
a structure is in the set, all structures that are isomorphic to it are also in the set.

We assume that every structure is given as a string, encoded as follows for a fixed
vocabulary \cite[Section 6.1]{Libkin2012}.
First, if the domain contains elements $a_1,\dots,a_n$, then the string begins with 
$n$ symbols $0$ followed by $1$. The vocabulary is fixed, so we take some order for 
the predicates, $r_1,\dots,r_m$. We then append, in this order, the encoding of
the interpretation of each predicate. Focus on predicate $r_i$ of arity $k$. 
To encode it with respect to a domain, we need to order the elements of the domain,
say $a_1 < a_2 < \dots < a_n$. This total ordering is assumed for now to be always 
available; it will be important later to check that the ordering itself can be defined. In any
case, with a total ordering we can enumerate lexicographically all $k$-tuples over the 
domain. Now suppose $\hat{a}_j$ is the $j$th tuple in this enumeration; then the $j$the 
bit of the encoding of $r_i$ is $1$ if $r(\hat{a}_j)$ is true in the given interpretation, and 
$0$ otherwise. Thus the encoding is a string containing 
$n+1+\sum_{i=1}^m n^{\mathrm{arity}(r_i)}$ symbols (either $0$ or $1$).

We can now state Fagin's theorem:

\begin{Theorem}
Let $\mathcal{S}$ be an isomorphism-closed set of finite structures of some 
non-empty finite vocabulary. Then $\mathcal{S}$ is in $\mathsf{NP}$ if and only if 
$\mathcal{S}$ is the class of finite models of a sentence in existential function-free
second-order logic.
\end{Theorem}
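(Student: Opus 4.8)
\section*{Proof proposal}

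The plan is to prove the two implications separately; the direction $\mathsf{ESO} \subseteq \mathsf{NP}$ is routine, and the converse carries the real content. For the easy direction, suppose $\mathcal{S}$ is the class of finite models of $\phi = \exists r_1 \cdots \exists r_m\,\phi'$ with $\phi' \in \mathsf{FFFO}$. I would exhibit a nondeterministic machine that, on an input string encoding a structure $\mathfrak{A}$, first recovers the domain size $n$ from the leading block of $0$s, then nondeterministically writes a truth table for each $r_i$; since $r_i$ has fixed arity, this table has $n^{\mathrm{arity}(r_i)}$ entries, which is polynomial in the input length. It then evaluates the fixed sentence $\phi'$ on the expanded structure and accepts iff $\phi'$ holds. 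The one sub-claim to establish is that a fixed $\mathsf{FFFO}$ sentence can be evaluated on a finite structure in polynomial time: argue by structural recursion, looping over the at most $n$ domain elements for each quantifier, so that with a constant number of variables the whole evaluation costs $O(n^{O(1)})$. The resulting machine accepts exactly the strings encoding members of $\mathcal{S}$, so $\mathcal{S} \in \mathsf{NP}$.

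For the hard direction, let $M$ be a nondeterministic Turing machine deciding the string encodings of $\mathcal{S}$ within time $n^k$ on structures of domain size $n$; after a preliminary normalization I may assume $M$ has one tape, a binary work alphabet, exactly two nondeterministic choices per step, never moves left of its first cell, and halts after exactly $n^k$ steps. The idea is to let $k$-tuples of domain elements, ordered lexicographically, serve as addresses for the $n^k$ time steps and the $n^k$ tape cells. First I would note that $\mathsf{ESO}$ may existentially quantify a binary predicate $<$ and pin it down, with an $\mathsf{FFFO}$ conjunct, to be a linear order of the domain; the induced lexicographic order on $k$-tuples, together with its successor relation, minimum, and maximum, are then $\mathsf{FFFO}$-definable.

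Next I would introduce existential predicates describing a run of $M$: $T_0$ and $T_1$ for the symbol in a cell at a given time, $H$ for the head position at a given time, a predicate $S_q$ for each state $q$ flagging the times at which $M$ is in state $q$, and $C$ for the nondeterministic choice made at a given time. The sentence $\phi'$ then asserts: (i) at time zero the tape holds the encoding of $\mathfrak{A}$, the head is at cell zero, and the state is initial; (ii) for every time $\bar t$ and its lexicographic successor, the configuration at the successor time is obtained from that at $\bar t$ by one step of $M$ governed by $C(\bar t)$, with cells away from the head unchanged; (iii) the predicates are functional where needed (one symbol per cell, one state per time), though this can be folded into the other conjuncts; and (iv) some time carries the accepting state. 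Each conjunct is a fixed $\mathsf{FFFO}$ formula built from the transition table of $M$ and the order machinery, so $\phi = \exists{<}\,\exists T_0\,\exists T_1\,\exists H\,\exists S_{q_1}\cdots\exists C\,\phi'$ is a single $\mathsf{ESO}$ sentence depending only on $M$. A model of $\phi$ is precisely an encoding of an accepting run of $M$ on $\mathrm{enc}(\mathfrak{A})$, whence $\mathfrak{A} \models \phi$ iff $M$ accepts $\mathrm{enc}(\mathfrak{A})$ iff $\mathfrak{A} \in \mathcal{S}$; here isomorphism-closure of $\mathcal{S}$ is what guarantees the answer does not depend on which linear order was guessed for $<$.

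I expect the main obstacle to be conjunct (i), the faithful loading of the input: the tape contents at time zero must be, cell by cell, the bitstring $0^n 1$ followed by the truth-table encodings of the vocabulary predicates, so I must write $\mathsf{FFFO}$ formulas that, given a tape address $\bar p$, decide which region of the encoding $\bar p$ falls in and, inside the region belonging to predicate $r_i$, extract the tuple $\bar a$ whose bit sits at offset $\bar p$ and equate $T_1(\bar 0,\bar p)$ with $r_i(\bar a)$. This amounts to a little bounded arithmetic — computing and comparing offsets that are themselves $k$-digit base-$n$ numbers — carried out in first-order logic over a linear order; it is possible but fiddly, and the transition conjunct (ii) is conceptually trivial yet notationally heavy for the same reason. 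Everything else (the order axioms, the lexicographic successor, the acceptance conjunct, and the polynomial-time evaluation in the easy direction) is straightforward once the addressing scheme is fixed.
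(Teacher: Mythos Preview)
The paper does not actually prove this statement: Fagin's theorem is quoted as background, with references to Gr\"adel and Libkin, and no proof is supplied. Your proposal is a correct and essentially complete sketch of the standard proof (the one in Gr\"adel that the paper later invokes inside the proof of Theorem~\ref{theorem:CapturePP}), with exactly the ingredients the paper relies on there: an existentially quantified linear order, lexicographic addressing of time and tape by $k$-tuples, and a first-order description of an accepting run; the isomorphism-closure observation you make about the guessed order is likewise the point the paper exploits when it notes that all $n!$ linear orders yield the same count.
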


Denote by $\mathsf{CHECK}$ the problem of deciding whether an input structure is
a model of a fixed existential function-free second-order sentence.
Fagin theorem means first that $\mathsf{CHECK}$ is in $\mathsf{NP}$ (this is the easy
part of the theorem). Second, the theorem means that every  language that can be decided 
by a polynomial-time nonderministic Turing machine can be exactly encoded as the set 
of models for a sentence in existential second-order logic (this is the surprising part of 
the theorem). 
This implies that $\mathsf{CHECK}$ is $\mathsf{NP}$-hard, but the theorem is much
more elegant (because it says that there is no need for some polynomial processing outside 
of the specification provided by existential second-order logic). 

The significance of Fagin's theorem is that it offers a definition of $\mathsf{NP}$
that is not tied to any computational model; rather, it is tied to the expressivity
of the language that is used to specify problems. Any language  
that can be decided by a polynomial nondeterministic Turing machine can 
equivalently be
be decided using first-order logic with some added quantification over predicates. 

\section{First-order Bayesian network specifications and the complexity class $\mathsf{PP}$}
\label{section:Characters}

We start by defining our two main characters: on one side we have Bayesian networks
that are specified using $\mathsf{FFFO}$; on the other side we have the complexity
class $\mathsf{PP}$. 

It will now be convenient to view each grounded predicate $r(\hat{a})$ as a random
variable once we have a fixed vocabulary and domain. So, given a domain $\mathcal{D}$, 
we understand $r(\hat{a})$ as a function over all possible interpretations of the
vocabulary, so that $r(\hat{a})(\mathbb{I})$ yields $1$ if $r(\hat{a})$ is true in interpretation
$\mathbb{I}$, and $0$ otherwise.

\subsection{First-order Bayesian network specifications}

A {\em first-order Bayesian network specification} is a directed graph where each
node is a predicate, and where each root node $r$ is associated with a probabilistic
assessment 
\[
\pr{r(\hat{\logvar{x}})=1}=\alpha,
\]
while each non-root node $s(\hat{\logvar{x}})$ is associated with a formula (called the
{\em definition} of $s$)
\[
s(\hat{x}) \Leftrightarrow \phi(\hat{\logvar{x}}),
\]
where $\phi(\hat{\logvar{x}})$ is a formula in $\mathsf{FFFO}$ with free variables $\hat{\logvar{x}}$. 

Given a domain, a first-order Bayesian network specification can be grounded into
a unique Bayesian network. This is done:
\begin{enumerate}
\item by producing every grounding of the predicates, 
\item by associating with each grounding $r(\hat{a})$ of a root predicate the
grounded assessment $\pr{r(\hat{a})=1}=\alpha$;
\item by associating with each grounding $s(\hat{a})$ of a non-root predicate the
grounded definition $s(\hat{a}) \Leftrightarrow \phi(\hat{a})$;
\item finally, by drawing a graph where each node is a grounded predicate and where
there is an edge into each grounded non-root predicate $s(\hat{a})$ from each
grounding of a predicate that appears in the grounded definition of $s(\hat{a})$. 
\end{enumerate}

Consider, as an example, the following model of asymmetric friendship, where
an individual is always a friend of herself, and where two
individuals are friends if they are both fans (of some writer, say) or if there is some
``other'' reason for it:
\begin{eqnarray}
\label{equation:Friends}
\pr{\mathsf{fan}(\logvar{x})} & = & 0.2, \nonumber \\
\pr{\mathsf{friends}(\logvar{x},\logvar{y})} & \Leftrightarrow & (\logvar{x} = \logvar{y}) \vee \nonumber \\
& & (\mathsf{fan}(\logvar{x}) \wedge \mathsf{fan}(\logvar{y})) \vee \\
& & \mathsf{other}(\logvar{x},\logvar{y}), \nonumber \\
\pr{\mathsf{other}(\logvar{x},\logvar{y})} & = & 0.1. \nonumber
\end{eqnarray}
Suppose we have domain $\mathcal{D}=\{a,b,c\}$. Figure \ref{figure:Friends} depicts
 the Bayesian
network generated by $\mathcal{D}$ and Expression (\ref{equation:Friends}).

\begin{figure}
\begin{center}
\begin{tikzpicture}
\node[rectangle,rounded corners,draw,fill=yellow] (fa) at (4,2.5) {$\mathsf{fan}(a)$};
\node[rectangle,rounded corners,draw,fill=yellow] (fb) at (6,2.5) {$\mathsf{fan}(b)$};
\node[rectangle,rounded corners,draw,fill=yellow] (fc) at (8,2.5) {$\mathsf{fan}(c)$};
\node[rectangle,rounded corners,draw,fill=yellow] (frab) at (1,1) {$\mathsf{friends}(a,b)$};
\node[rectangle,rounded corners,draw,fill=yellow] (frac) at (3,1) {$\mathsf{friends}(a,c)$};
\node[rectangle,rounded corners,draw,fill=yellow] (frba) at (5,1) {$\mathsf{friends}(b,a)$};
\node[rectangle,rounded corners,draw,fill=yellow] (frbc) at (7,1) {$\mathsf{friends}(b,c)$};
\node[rectangle,rounded corners,draw,fill=yellow] (frca) at (9,1) {$\mathsf{friends}(c,a)$};
\node[rectangle,rounded corners,draw,fill=yellow] (frcb) at (11,1) {$\mathsf{friends}(c,b)$};
\node[rectangle,rounded corners,draw,fill=yellow] (oab) at (1,0) {$\mathsf{other}(a,b)$};
\node[rectangle,rounded corners,draw,fill=yellow] (oac) at (3,0) {$\mathsf{other}(a,c)$};
\node[rectangle,rounded corners,draw,fill=yellow] (oba) at (5,0) {$\mathsf{other}(b,a)$};
\node[rectangle,rounded corners,draw,fill=yellow] (obc) at (7,0) {$\mathsf{other}(b,c)$};
\node[rectangle,rounded corners,draw,fill=yellow] (oca) at (9,0) {$\mathsf{other}(c,a)$};
\node[rectangle,rounded corners,draw,fill=yellow] (ocb) at (11,0) {$\mathsf{other}(c,b)$};
\draw[->,>=latex,thick] (fa)--(frab);
\draw[->,>=latex,thick] (fa)--(frac);
\draw[->,>=latex,thick] (fa)--(frba);
\draw[->,>=latex,thick] (fa)--(frca);
\draw[->,>=latex,thick] (fb)--(frba);
\draw[->,>=latex,thick] (fb)--(frbc);
\draw[->,>=latex,thick] (fb)--(frab);
\draw[->,>=latex,thick] (fb)--(frcb);
\draw[->,>=latex,thick] (fc)--(frca);
\draw[->,>=latex,thick] (fc)--(frcb);
\draw[->,>=latex,thick] (fc)--(frac);
\draw[->,>=latex,thick] (fc)--(frbc);
\draw[->,>=latex,thick] (oab)--(frab);
\draw[->,>=latex,thick] (oac)--(frac);
\draw[->,>=latex,thick] (oba)--(frba);
\draw[->,>=latex,thick] (obc)--(frbc);
\draw[->,>=latex,thick] (oca)--(frca);
\draw[->,>=latex,thick] (ocb)--(frcb);
\node[rectangle,rounded corners,draw,fill=yellow] (fraa) at (4,3.5) {$\mathsf{friends}(a,a)$};
\node[rectangle,rounded corners,draw,fill=yellow] (oaa) at (4,4.5) {$\mathsf{other}(a,a)$};
\draw[->,>=latex,thick] (fa)--(fraa);
\draw[->,>=latex,thick] (oaa)--(fraa);
\node[rectangle,rounded corners,draw,fill=yellow] (frbb) at (6,3.5) {$\mathsf{friends}(b,b)$};
\node[rectangle,rounded corners,draw,fill=yellow] (obb) at (6,4.5) {$\mathsf{other}(b,b)$};
\draw[->,>=latex,thick] (fb)--(frbb);
\draw[->,>=latex,thick] (obb)--(frbb);
\node[rectangle,rounded corners,draw,fill=yellow] (frcc) at (8,3.5) {$\mathsf{friends}(c,c)$};
\node[rectangle,rounded corners,draw,fill=yellow] (occ) at (8,4.5) {$\mathsf{other}(c,c)$};
\draw[->,>=latex,thick] (fc)--(frcc);
\draw[->,>=latex,thick] (occ)--(frcc);
\end{tikzpicture}
\end{center}
\caption{The Bayesian network generated by Expression (\ref{equation:Friends}) and domain
$\mathcal{D}=\{a,b,c\}$.}
\label{figure:Friends}
\end{figure}
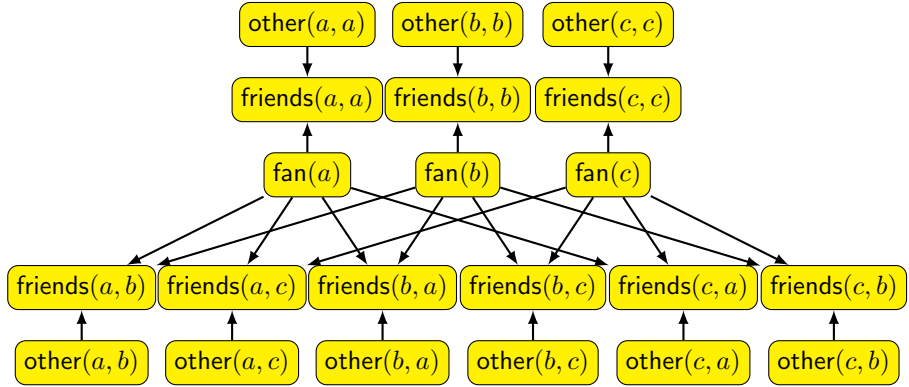

For a given Bayesian network specification $\tau$ and a domain $\mathcal{D}$, denote
by $\mathbb{B}(\tau,\mathcal{D})$ the Bayesian network obtained by grouning $\tau$
with respect to $\mathcal{D}$. 
The set of all first-order Bayesian network specifications is denoted by $\mathcal{B}(\mathsf{FFFO})$.

\subsection{Probabilistic Turing machines and the complexity class $\mathsf{PP}$}

If a Turing machine is such that, whenever its transition function maps to a
non-singleton set, the transition is selected with uniform probability
within that set, then the Turing machine is a {\em probabilistic Turing machine}. 
The complexity class $\mathsf{PP}$ is the set of languages that are decided
by a probabilistic Turing machine in polynomial time, with an error probability
strictly less than $1/2$ for all input strings. 

Intuitively, $\mathsf{PP}$ represents the complexity of computing probabilities
for a phenomenon that can be simulated by a polynomial probabilistic Turing machine.

This complexity class can be equivalently defined as follows: 
a language is in $\mathsf{PP}$ if and only if there is a polynomial nondeterministic
Turing machine such that a string is  in the language if and only if  more than 
half of the computation paths of the machine end in the accepting state when the
string is the input. 
We can imagine that there is a special class of nondeterministic Turing
machines that, given an input, not only accept it or not, but actually write in some
special tape whether that input is accepted in the majority of computation paths. 
Such a special machine could then be used directly to decide a language in
$\mathsf{PP}$. 

\section{$\mathcal{B}(\mathsf{FFFO})$ captures $\mathsf{PP}$}
\label{section:CapturePP}

Given a first-order Bayesian network specification and a domain, an {\em evidence piece}
$\mathbf{E}$ is a partial interpretation; that is, an evidence piece assigns a truth value for
some groundings of predicates. 

We encode a pair domain/evidence $(\mathcal{D},\mathbf{E})$ using the same strategy 
used before to  encode a structure; however, we must take into account the fact that
a particular grounding of a predicate can be either assigned true or false or be left
without assignment. So we use a pair of symbols in $\{0,1\}$ to encode each
grounding; we assume that $00$ 
means ``false'' and $11$ means ``true'', while say $01$ means lack of assignment. 

Say there is an {\em isomorphism} between pairs $(\mathcal{D}_1,\mathbf{E}_1)$
and $(\mathcal{D}_2,\mathbf{E}_2)$ 
when there is a bijective mapping $g$ between the domains such that if
$r(a_1,\dots,a_k)$ is true in $\mathbf{E}_1$, then $r(g(a_1),\dots,g(a_k))$ is true
in $\mathbf{E}_2$, and moreover if $r(a_1,\dots,a_k)$ is true in $\mathbf{E}_2$,
then $r(g^{1}(a_1),\dots,g^{-1}(a_k))$ is true in $\mathbf{E}_1$ (where again $g^{-1}$
denotes the inverse of $g$). A set of pairs domain/evidence is {\em isomorphism-closed} 
if whenever a pair is in the set, all pairs that are isomorphic to it are also in the set.

Suppose a set of pairs domain/evidence 
is given with respect to a fixed vocabulary $\sigma$. Once encoded, these pairs 
form a language $\mathcal{L}$ that can for instance belong to $\mathsf{NP}$ or to $\mathsf{PP}$. 
One can imagine building a Bayesian network specification $\tau$ on an extended
vocabulary consisting of $\sigma$ plus some additional predicates, so as to
decide this language $\mathcal{L}$ of domain/evidence pairs. For a given input pair
$(\mathcal{D},\mathbf{E})$, the Bayesian network specification and the domain
lead to a Bayesian network $\mathbb{B}(\tau,\mathcal{D})$; this 
network can be used  to compute the probability of some groundings, and that
probabiility in turn can be   
used to accept/reject the input. This is the sort of strategy we pursue.

The point is that we must determine some prescription by which, given a Bayesian 
network and an evidence piece, one can generate an actual decision so as to accept/reject 
the input pair domain/evidence. Suppose we take the following strategy. Assume that
in the extended vocabulary of $\tau$ there are two sets of distinguished auxiliary 
predicates $A_1,\dots,A_{m'}$ and $B_1,\dots,B_{m''}$ that are not in $\sigma$. 
We can use the Bayesian network $\mathbb{B}(\tau,\mathcal{D})$ to
compute the probability $\pr{\mathbf{A}|\mathbf{B},\mathbf{E}}$  where 
$\mathbf{A}$ and $\mathbf{B}$ are interpretations of $A_1,\dots,A_{m'}$ and
$B_1,\dots,B_{m''}$ respectively. And then we might accept/reject the input on
the basis of $\pr{\mathbf{A}|\mathbf{B},\mathbf{E}}$. However, we cannot
specify particular intepretations $\mathbf{A}$ and $\mathbf{B}$ as the related
predicates are not in the vocabulary $\sigma$. Thus the sensible strategy is
to fix attention to some selected pair of intepretations; 
we simply take the interpretations that assign true to every grounding.

In short: use the Bayesian network $\mathbb{B}(\tau,\mathcal{D})$
to determine whether or not
$\pr{\mathbf{A}|\mathbf{B},\mathbf{E}}>1/2$, where
$\mathbf{A}$ assigns true to every grounding of $A_1,\dots,A_{m'}$,
and 
$\mathbf{B}$ assigns true to every grounding of $B_1,\dots,B_{m''}$.
If this inequality is satisfied, the input pair is {\em accepted}; if not,
the input pair is {\em rejected}.

We refer to $A_1,\dots,A_{m'}$ as the {\em conditioned predicates}, and
to $B_1,\dots,B_{m''}$ as the {\em conditioning predicates}.

Here is the main result:

\begin{Theorem}\label{theorem:CapturePP}
Let $\mathcal{S}$ be an isomorphism-closed set of pairs domain/evidence of some
non-empty finite vocabulary, where all domains are finite.
Then $\mathcal{S}$ is in $\mathsf{PP}$ if and only if $\mathcal{S}$ is the class
of domain/evidence pairs that are accepted by a fixed first-order Bayesian network 
specification with fixed conditioned and conditioning   predicates. 
\end{Theorem}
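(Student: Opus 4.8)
The plan is to prove the two directions separately, mirroring the structure of Fagin's theorem but with $\mathsf{PP}$ in place of $\mathsf{NP}$ and first-order Bayesian network specifications in place of $\mathsf{ESO}$ sentences. The easy direction is that any set $\mathcal{S}$ of accepted domain/evidence pairs lies in $\mathsf{PP}$. Fix the specification $\tau$, the conditioned predicates $\mathbf{A}$, and the conditioning predicates $\mathbf{B}$. Given an input pair $(\mathcal{D},\mathbf{E})$, the grounded network $\mathbb{B}(\tau,\mathcal{D})$ has polynomially many nodes (since arities are fixed by the finite vocabulary), and each grounded conditional probability table is either a fixed rational $\alpha$ or is deterministic, given by evaluating the grounded $\mathsf{FFFO}$ definition. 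A probabilistic Turing machine can sample the root predicates according to their Bernoulli parameters (using the uniform branching to approximate rationals $\alpha$, after clearing denominators), propagate the deterministic definitions, and check whether the sampled interpretation is consistent with $\mathbf{E}$, with $\mathbf{B}$ all true, and with $\mathbf{A}$ all true. Turning the conditional-probability threshold $\pr{\mathbf{A}\mid\mathbf{B},\mathbf{E}}>1/2$ into an unconditional ``majority of paths accept'' statement is the one delicate bookkeeping point: one uses the standard trick of padding/rejecting the paths where the conditioning event fails so that they split evenly, so that the input is accepted in a strict majority of computation paths exactly when the conditional probability exceeds $1/2$. This shows $\mathcal{S}\in\mathsf{PP}$.

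For the hard direction, suppose $\mathcal{S}\in\mathsf{PP}$. Use the nondeterministic characterization: there is a polynomial-time nondeterministic Turing machine $N$ such that $(\mathcal{D},\mathbf{E})\in\mathcal{S}$ iff more than half of the computation paths of $N$ accept. The plan is to encode the computation of $N$ inside a first-order Bayesian network specification. I would proceed in three moves. First, handle the \emph{ordering} issue flagged in the background section: the encoding of a structure presupposes a total order on the domain, so I introduce an auxiliary predicate (a binary $\mathsf{order}$ relation) whose interpretation is among the conditioning predicates, forced by its definition to be a linear order — but since we cannot directly force this with a definitional equivalence over root predicates alone, instead I make $\mathsf{order}$ (or a tournament relation from which a linear order is extracted) a root predicate assigned probability $1/2$ on each grounding, and include in $\mathbf{B}$ a derived predicate ``$\mathsf{order}$ is a linear order'' forced true; conditioning on it restricts attention to the interpretations that are genuine orderings. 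Second, simulate $N$: by a Fagin-style / Cook–Levin-style construction there is an $\mathsf{FFFO}$-definable way to describe a single computation path of $N$ on the encoded input, where the nondeterministic choices of $N$ are captured by additional root predicates each assigned probability $1/2$; the existence of an accepting path with those choices is an $\mathsf{FFFO}$ formula (after the order is available, by Fagin's theorem the ``accept'' predicate on a given choice-string is $\mathsf{FFFO}$-definable, pushing the second-order existential into the sampled root predicates). Third, arrange that the sampled root predicates — the nondeterministic bits together with the ordering bits — are in bijection with computation paths of $N$ on the given input, each occurring with equal probability, and let the single conditioned predicate $\mathbf{A}$ be the derived ``this path accepts'' predicate; then $\pr{\mathbf{A}\mid\mathbf{B},\mathbf{E}}>1/2$ precisely encodes ``more than half the paths accept.''

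The main obstacle, I expect, is the interface between the order-dependence of the encoding and the uniformity of the sampling distribution. In Fagin's theorem one only needs \emph{some} linear order to exist, because a second-order existential absorbs it; here the order is sampled as a random object, so different orderings of the domain must contribute to the probability in a controlled, symmetric way, and one must ensure that conditioning on ``is a linear order'' does not skew the count of accepting versus rejecting computation paths. The clean fix is to make $N$'s behaviour invariant under the choice of order — which is legitimate because $\mathcal{S}$ is isomorphism-closed, so whether $N$ accepts does not depend on how the domain elements are named, hence the number of sampled orderings multiplies both the accepting and rejecting path counts by the same factor $n!$ and the majority condition is preserved. Making this invariance argument precise, and checking that the number of nondeterministic bits of $N$ can be taken to be a fixed polynomial in the encoded input length so that it matches a fixed finite tuple of predicates of fixed arity over the domain, is the part that requires genuine care; the rest is an adaptation of the standard $\mathsf{FFFO}$-definability of Turing machine computations together with the $\mathsf{PP}$-to-counting translation used in the easy direction.
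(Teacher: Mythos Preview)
Your proposal is correct and follows essentially the same route as the paper: the hard direction is precisely the paper's argument (auxiliary root predicates with probability $1/2$, a sampled binary order predicate conditioned via a zero-arity $B \Leftrightarrow \phi''$ to be linear, a zero-arity $A \Leftrightarrow \phi'$ given by the Fagin sentence encoding an accepting computation, and the observation that the $n!$ linear orders multiply accepting and rejecting counts by the same factor so the ratio is preserved). Your easy direction is in fact more careful than the paper's brief sketch, which does not explicitly address the conditional-to-majority conversion or root probabilities $\alpha\neq 1/2$; your padding trick and denominator-clearing remarks fill exactly those gaps.
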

\begin{proof}
First, if $\mathcal{S}$ is a class of domain/query pairs that are accepted by a
fixed first-order Bayesian network specification, they can be decided by
a polynomial time probability Turing machine. To see that, note that we
can build a nondeterministic Turing machine that guesses the truth value of
all groundings that do not appear in the query (that is, not in 
$\mathbf{A}\cup\mathbf{B}\cup\mathbf{E}$), and then verify whether the resulting
complete interpretation is a model of the first-order Bayesian network specification
(as model checking of a fixed first-order sentence is in $\mathsf{P}$ \cite{Libkin2012}).

To prove the other direction,
we must adapt the proof of Fagin's theorem as described by Gr\"adel \cite{Gradel2007},
along the same lines as the proof of Theorem 1 by Saluja et al.\ \cite{Saluja95}.
So, suppose that $\mathcal{L}$ is a language decided by some probabilistic
Turing machine. So equivalently there is a nondeterministic Turing machine 
that determines whether the majority of its computation paths accept an input,
and accepts/rejects the input accordingly. By the mentioned proof of Fagin's 
theorem, there is a first-order sentence $\phi'$ with vocabulary consisting of the vocabulary 
of the input plus additional auxiliary predicates, such that each interpretation of
this joint vocabulary is a model of the sentence if it is encodes a computation
path of the Turing machine, as long as there is an available additional predicate
that is guaranteed to be a linear order on the domain. Denote by $A$ the
zero arity predicate with associated definition $A \Leftrightarrow \phi'$.
Suppose a linear order
is indeed available; then by creating a first-order Bayesian network specification
where all groundings are associated with probability $1/2$, and where a
non-root node is associated with the sentence in the proof of Fagin's theorem, 
we have that the probability of the query is larger than $1/2$ iff the
majority of computation paths accept. The challenge is to encode a linear
order. To do so, introduce a new predicate $<$ and  the first-order sentence $\phi''$
that forces $<$ to be a total order, and a zero arity predicate $B$ that is 
associated with definition  $B \Leftrightarrow \phi''$. Now an input domain/pair 
$(\mathcal{D},\mathbf{E})$ is accepted by
the majority of computation paths in the Turing machine if and only if
we have $\pr{A|B,\mathbf{E}}>1/2$. Note that there are actually $n!$
linear orders that satisfy $B$, but for each one of these linear orders we
have the same assignments for all other predicates, hence the ratio between
accepting computations and all computations is as desired. 
\end{proof}

We might picture this as follows. There is always a Turing machine $\mathbb{TM}$
and a corresponding triple $(\tau,A,B)$ such that for any
pair $(\mathcal{D},\mathbf{E})$, we have
\[
(\mathcal{D},\mathbf{E}) \mbox{ as input to } \mathbb{TM} \mbox{ with output given by }  
\pr{\mathbb{TM} \mbox{ accepts } (\mathcal{D},\mathbf{E})} > 1/2,
\]
if and only if 
\[
(\mathcal{D},\mathbf{E}) \mbox{ as ``input'' to }  (\tau,A,B) \mbox{ with ``output'' given by } 
\pro_{\tau,\mathcal{D}}(A|B,\mathbf{E})>1/2,
\]
where $\pro_{\tau,\mathcal{D}}(A|B,\mathbf{E})$ denotes   probability
with respect to $\mathbb{B}(\tau,\mathcal{D})$. (Of course, there is no need
to use only zero-arity predicates $A$ and $B$, as Theorem \ref{theorem:CapturePP}
allows for sets of predicates.)

Note that the same result could be proved if every evidence piece was taken 
to be a complete interpretation for the vocabulary $\sigma$. In that case we could
directly speak of structures as inputs, and then the result would more closely
mirror Fagin's theorem. However it is very appropriate, and entirely in line
with practical use, to take the inputs to a Bayesian network as the 
groundings of a partially observed interpretation. Hence we have preferred
to present our main result as stated in Theorem \ref{theorem:CapturePP}.

\end{document}